\newtheorem{thm}{Theorem}
\newtheorem*{unthm}{Theorem}
\newtheorem{lem}{Lemma}
\DeclareMathOperator*{\sign}{sign}
\DeclareMathOperator*{\argmin}{argmin}
\title{Online Importance Weight Aware Updates}
\author{{\bf Nikos Karampatziakis}\thanks{~Research done while author was at Yahoo! Research}\\
Department of Computer Science\\
Cornell University\\
\texttt{nk@cs.cornell.edu}
\And 
{\bf John Langford}\\
Yahoo! Research\\
\texttt{jl@yahoo-inc.com}
}
\begin{document}

\maketitle 
 
\begin{abstract} %UAI guidelines: Abstract is 1 paragraph.
  An importance weight quantifies the relative importance of one example
  over another, coming up in applications of boosting, asymmetric
  classification costs, reductions, and active learning.  The standard
  approach for dealing with importance weights in gradient descent is
  via multiplication of the gradient.  We first demonstrate the problems
  of this approach when importance weights are large, and argue in favor
  of more sophisticated ways for dealing with them.  We then develop an
  approach which enjoys an invariance property: that updating twice with
  importance weight $h$ is equivalent to updating once with importance
  weight $2h$. For many important losses this has a closed form update
  which satisfies standard regret guarantees when all examples have
  $h=1$.  We also briefly discuss two other reasonable approaches for
  handling large importance weights.  Empirically, these approaches
  yield substantially superior prediction with similar computational
  performance while reducing the sensitivity of the algorithm to the
  exact setting of the learning rate.  We apply these to online active
  learning yielding an extraordinarily fast active learning algorithm
  that works even in the presence of adversarial noise.
\end{abstract}

\section{INTRODUCTION}

%TODO: cite Cortes Mansour Mohri NIPS 2010
Importance weights appear in boosting algorithms~\cite{freund1995decision} which
assign a weight to each example depending on how well this point has
been classified in previous iterations, covariate shift
algorithms~\cite{huang2007correcting} which assign a weight to a training
example according to how close to the test distribution the example
is, and active learning algorithms~\cite{beygelzimer2009importance,beygelzimer2010agnostic} where an adaptive
rejection sampling scheme is applied to each example and each retained
example gets an importance equal to the inverse probability of being
retained. Importance weights have become a de-facto language for specifying the
relative importance of prediction amongst examples.

When not concerned by computational constraints, importance weights
can be dealt with using either black box
techniques~\cite{Schapire1990,zadrozny2003cost} or direct modification of
existing algorithms such that an existing example with importance
weight $h$ is treated as $h$ examples.  However, when computational
constraints are significant online gradient descent based algorithms
are preferred.  Here the standard approach of treating an example with
importance weight $h$ as $h$ examples is typically translated into
practice via multiplying the gradient by $h$. This is undesirable for
large $h$ because such an example can cause an update that's far 
beyond what's necessary to attain a small loss on it.

An important observation is that multiplying the gradient by $h$ is
typically \emph{not} equivalent to doing $h$ updates via gradient
descent because all loss functions of interest are nonlinear.  The
goal of this paper is resolving this translation failure by investigating
alternate updates that gracefully deal with importance weights, by
taking into account the curvature of the loss.  Among these updates 
we mainly focus on a novel set of updates that satisfies an additional 
\emph{invariance} property: for all importance weights of $h$,
the update is equivalent to two updates with importance
weight $h/2$. We call these updates \emph{importance invariant}. 

Even though the importance invariant updates will be defined via 
an ordinary differential equation (ODE), we were surprised to 
find that they are closed-form for all common loss functions.  We were
also surprised to discover that the importance weight invariant update
substantially improves the learned predictor even when $h=1$, both in
terms of the quality of best predictor after a parameter search and in
terms of the robustness to parameter search, effectively reducing the
desirability of searching over some schedule of learning rates.  Upon
inspection, the reason for this is that an importance weight invariant
update smoothly interpolates between a very aggressive
projection~\cite{herbster2001learning,crammer2006online} algorithm and 
a less aggressive gradient multiplier decay algorithm.  All of these 
benefits come at near-zero computational cost.

Among the other algorithms we consider, implicit updates \cite{ig1,ig2} 
turn out to coincide with importance invariant ones for piecewise linear 
losses and provide qualitatively similar updates for other losses, 
implying that our derivation is an alternative way to motivate this 
style of algorithm. For most other loss functions implicit updates require a 
root-finding algorithm. 

Finally, another reasonable way to handle importance is
related to \cite{dekel2005smooth}, who analyze it for the logistic and 
exponential losses. Here, a second order Taylor 
expansion at the current prediction and in the direction of the update 
is used to approximate the loss. These updates coincide with implicit 
updates for squared loss (since the quadratic approximation is exact)
and are not applicable to piecewise linear losses. We won't discuss
these updates any further since they have only been analyzed for very
specific loss functions. 

In section \ref{setting} we define the problem and describe some
obvious but unsatisfactory approaches. Next, we propose the importance
invariant solution and present a general framework for deriving
importance invariant updates for many loss functions, in
section~\ref{solution}.  We subsequently discuss some important
properties of the proposed updates, such as safety, in
section~\ref{properties}. Section~\ref{imp2} briefly covers 
how implicit updates can handle importance weights. 
In section~\ref{exps} we empirically
demonstrate the merits of not linearizing the loss on problems with and without
importance weights. Section~\ref{concl} states our conclusions.

\section{PROBLEM SETTING %AND NOTATION
} \label{setting}

We assume access to a training set of triplets 
$(x_t,y_t,h_t)$, $t=1,\ldots,T$ where $x_t \in \mathbb{R}^d$ 
is a vector of $d$ features, $h_t \in \mathbb{R}_+$ is an importance
weight, and $y_t \in \mathbb{R}$ is a label. We are also given a
loss function $\ell(p,y)$ where $p$ is the prediction of our 
model and $y$ is the actual label. Depending on the loss function,
$y$ may take values in a restricted set, such as $\{-1,+1\}$ 
or $\{0,1\}$. In this paper we focus on
linear models i.e. $p=w^\top x$ where $w \in \mathbb{R}^d$
is a vector of weights. Our goal is to find
\begin{equation}\label{eq:weighted-objective}
w = \argmin_{w} \sum_{t=1}^T h_t \ell(w^\top x_t,y_t),
\end{equation}
using online gradient descent. When examples do \emph{not} have 
importance weights the online gradient descent algorithm is 
shown in Algorithm~\ref{tab:sgd}. The notation 
$\nabla_w \ell(w_t^\top x_t,y_t)$ means the gradient of the loss
with respect to $w$ evaluated at the $t$-th prediction
and label.

\begin{algorithm}
\caption{\label{tab:sgd}Online Gradient Descent}
$w_{1} \leftarrow 0$\\
\textbf{for} $t=1$ to $T$ \textbf{do}\\
\mbox{~~~~}$w_{t+1}\leftarrow w_{t}-\eta_{t}\nabla_{w}\ell(w_{t}^{\top}x_{t},y_{t})$\\
\textbf{done}
\end{algorithm}

When examples have importance weights, we would like to adhere
to the following principle:
\emph{An example with importance weight $h$ should be treated as 
if it is a regular example that appears $h$ times in the dataset.} 
This is a statement of both mathematical and semantic correctness.
Mathematically, (\ref{eq:weighted-objective}) states exactly the same thing. 
Semantically an example of importance $h$ is just a convenient 
encoding of $h$ identical examples. 
For now we assume importance weights are integers 
and the learning rate sequence is constant $\eta_t = \eta$. 
These assumptions are only for ease of exposition and are 
lifted in section 3.

\subsection{Some Unsatisfactory Approaches}

A first approach would be to loop through the data
enough epochs, with epoch $i$ using only those examples
whose importance is greater than $i$.
While this is a valid approach,
it is very inefficient. Ideally each example should be 
presented once to the learner.

Another tempting approach is multiplying the update by the importance weight:
\[
w_{t+1} = w_t - h_t\eta \nabla_w \ell(w_t^\top x_t,y_t).
\]
However, this update rule does not respect the principle 
of the previous section. To see this, consider the case 
$h_t=2$. The above rule should be equivalent to 
\begin{eqnarray*}
v & = &  w_t - \eta \nabla_w \ell(w_t^\top x_t,y_t)\\
w_{t+1} & = &  v - \eta \nabla_w \ell(v^\top x_t,y_t)
\end{eqnarray*}
which is not true in general. Furthermore, the quality of this update
gets worse as the importance weight gets larger since the first order
approximation of the loss is invalid far away from its expansion
point.

Another approach with good computational characteristics is rejection
sampling accoding to $h/h_{\max}$.  However,
this approach generally decreases performance due to throwing out
samples.  Rejection sampling can be repaired by learning multiple
predictors based upon different rejection sampled
datasets~\cite{zadrozny2003cost}, but this of course increases computation
substantially.

\subsection{An Efficient Invariant Approach}

To achieve invariance and efficiency we will focus on
the cumulative effect of presenting an example $h$ times 
in a row. This scheme respects our correctness principle
and considers each example once. The only remaining 
question is whether the cumulative effect of $h$ 
presentations in a row can be computed faster than 
explicitly doing so. In the next section we explain
why this is possible and how to compute it. 

\section{A FRAMEWORK FOR DERIVING STEP SIZES} \label{solution}

Given a loss of the form $\ell(p, y)$ where $p$ is the prediction and 
assuming a linear model $p=w^\top x$ we have that 
$\nabla_w \ell = \frac{\partial \ell}{\partial p} x$.
Therefore all gradients of a given example point to the same direction 
and only differ in magnitude. Hence computing the cumulative effect of 
presenting the example $h$ times in a row amounts to computing a global
scaling for $x$ that aggregates the effects of all the gradients. We 
begin with a simple lemma that formalizes this in the case of 
integer importance weights.
\begin{lem} \label{lem:int}
Let $h \in \mathbb{N}$. Presenting example $(x,y)$ $h$ times in a row 
is equivalent to the update
\begin{equation}\label{eq:general}
 w_{t+1} = w_t - s(h) x 
\end{equation}
where the scaling factor $s(h)$ has this recursive form:
\begin{eqnarray}
 s(h+1) & = & s(h)+\eta \left.\frac{\partial \ell}{\partial p}\right|_{p=(w_t -s(h)x)^\top x} \label{eq:rec} \\
 s(0) & = & 0 \label{eq:base}
\end{eqnarray}
\end{lem}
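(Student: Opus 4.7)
The plan is a simple induction on $h$, resting on the observation made just before the statement: for a linear model $\nabla_w \ell(w^\top x, y) = \left.\frac{\partial \ell}{\partial p}\right|_{p=w^\top x} x$. Because every gradient against the fixed example $(x,y)$ is a scalar multiple of the same vector $x$, any number of consecutive gradient-descent updates on $(x,y)$ must leave the weight in the form $w_t - s\, x$ for some scalar $s$. This reduces the whole claim to tracking a single real number $s$.

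For the base case $h = 0$ no update occurs, so $w_{t+1} = w_t = w_t - 0\cdot x$, which agrees with $s(0) = 0$. For the inductive step I would assume that after $h$ consecutive presentations the weight equals $w_t - s(h)\, x$, apply one further gradient step with learning rate $\eta$ at this point, invoke the chain-rule identity above to rewrite the gradient as $\left.\frac{\partial \ell}{\partial p}\right|_{p=(w_t - s(h)x)^\top x} x$, and read off the new coefficient of $x$. That coefficient is exactly $s(h)+\eta \left.\frac{\partial \ell}{\partial p}\right|_{p=(w_t - s(h)x)^\top x}$, which equals $s(h+1)$ by the recurrence (\ref{eq:rec}), closing the induction.

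There is no real obstacle: the substance of the lemma is the directional invariance of the updates along $x$, which is immediate from the chain rule for a linear predictor. The reason for isolating it as a lemma is that it cleanly exposes the scalar recurrence (\ref{eq:rec}), which the next step of the paper can reinterpret as a finite-difference scheme and extend to non-integer $h$ by passing to the $\eta \to 0$ continuum limit (an ODE for $s$), as announced in the introduction.
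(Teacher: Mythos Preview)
Your proposal is correct and follows essentially the same argument as the paper: induction on $h$, with the base case $s(0)=0$ trivial and the inductive step consisting of applying one more gradient step to $v=w_t-s(h)x$, using $\nabla_w\ell=\frac{\partial\ell}{\partial p}\,x$, and reading off the new scalar coefficient of $x$ as $s(h+1)$.
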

\begin{proof}
By induction on $h$. The base case is obvious. Now the effect of 
presenting the example $(x,y)$ $h+1$ times can be computed by 
performing a gradient update on the vector $v$ that results 
from presenting the example $h$ times.
By the induction hypothesis this intermediate vector is
\[
 v = w_t -s(h) x
\]
and the gradient descent step is
\[
 w_{t+1} = v - \eta \nabla_w \ell(w^\top x,y)|_{w=v}.
\]
Expanding this using the induction hypothesis we get
\begin{eqnarray*}
 w_{t+1} & =  & w_t -s(h) x -\eta  \left.\frac{\partial \ell}{\partial p}\right|_{p=v^\top x} x\\
         & =  & w_t -\left(s(h)+\eta \left.\frac{\partial \ell}{\partial p}\right|_{p=(w_t -s(h)x)^\top x}\right)x
\end{eqnarray*}
\end{proof}

Given a loss function, one could try to find a closed form 
solution to the recurrence defined by (\ref{eq:rec}),(\ref{eq:base}).
For example for squared loss $\ell(p,y)=\frac{1}{2}(p-y)^2$
the recurrence is
\[
 s(h+1) =  s(h)+\eta((w_t -s(h)x)^\top x -y).
\] 
A simple inductive argument can then verify that 
\begin{equation}\label{eq:closed1}
s(h)=\frac{w_t^\top x -y}{x^\top x}(1-(1-\eta x^\top x)^h)
\end{equation}
Note that when $\eta x^\top x<1$, $s(h)$ asymptotes to the
quantity that would make $w_{t+1}^\top x =y$. This behavior
is more desirable than that of multiplying the gradient 
with the importance weight.

Notwithstanding the significance of such an update,
it is restricted to integer importance weights. Moreover
other loss functions do not yield a recurrence with 
a closed form solution. 
To overcome these problems, we use (\ref{eq:closed1}) 
as a starting point to think about the consequences of 
presenting an example many times. To compensate, we will 
also need to adjust the learning rate we use everytime 
we present an example. Suppose that we present an example 
a factor of $n$ times more using a learning rate that 
is smaller by a factor of $n$.  This can be simulated in 
constant time using (\ref{eq:closed1}) with $hn$ and 
$\frac{\eta}{n}$ in place of $h$ and $\eta$ respectively. 
Letting $n$ grow large, we are interested in
\[  
\lim_{n\to \infty}\frac{w_t^\top x -y}{x^\top x}\left(1-\left(1-\frac{\eta x^\top x}{n} \right)^{nh}\right)
\]
Using that $\lim_{n\to\infty} (1+z/n)^n =e^z$ we have that
\begin{equation}\label{eq:deriv1}
s(h)=\frac{w_t^\top x-y}{x^\top x}\left(1-\exp (-h\eta x^\top x)\right).
\end{equation}
 
The key in the above derivation is 
using the limit of the gradient descent process
as the learning rate becomes infinitesimal.
We now generalize this idea to derive updates for 
other loss functions. 
\begin{thm}\label{mainthm}
The limit of the gradient descent process 
as the learning rate becomes infinitesimal
for an example with importance weight $h \in \mathbb{R}_+$
is equal to the update
\[
 w_{t+1} = w_t - s(h) x 
\]
where the scaling factor $s(h)$ satisfies the differential equation:
\begin{equation}\label{eq:diffeq}
 s'(h) = \eta \left.\frac{\partial \ell}{\partial p}\right|_{p=(w_t -s(h)x)^\top x}, \quad s(0) = 0
\end{equation}
\end{thm}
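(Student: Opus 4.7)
The plan is to reduce the theorem to a standard convergence result for Euler's method applied to the ODE \eqref{eq:diffeq}. Lemma \ref{lem:int} already tells us exactly what happens after $k$ successive updates at a fixed learning rate; the strategy is to subdivide each unit of importance weight into $n$ micro-updates, each using learning rate $\eta/n$, and then let $n\to\infty$.

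First I would fix $h\in\mathbb{R}_+$ and, for each positive integer $n$, apply Lemma \ref{lem:int} with integer importance weight $\lceil nh\rceil$ and learning rate $\eta/n$. Writing the resulting scaling as $\sigma_n(k)$ for $k=0,1,\ldots,\lceil nh\rceil$, the recursion \eqref{eq:rec}--\eqref{eq:base} reads
\[
\sigma_n(k+1)-\sigma_n(k) \;=\; \frac{\eta}{n}\left.\frac{\partial\ell}{\partial p}\right|_{p=(w_t-\sigma_n(k)x)^\top x},\qquad \sigma_n(0)=0.
\]
Defining the continuous-time interpolant $\tilde s_n:[0,h]\to\mathbb{R}$ by linear interpolation between the values $\sigma_n(k)$ placed at the nodes $k/n$, one sees that $\tilde s_n$ is exactly the piecewise-linear forward-Euler output, with step size $1/n$, applied to the initial value problem $s'(u)=\eta\,\partial\ell/\partial p|_{p=(w_t-s(u)x)^\top x}$ on $[0,h]$ with $s(0)=0$.

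At this point I would invoke the standard convergence theorem for Euler's method: provided that the right-hand side $F(u,s):=\eta\,\partial\ell/\partial p|_{p=(w_t-sx)^\top x}$ is Lipschitz in $s$ (which holds for the loss functions considered in the paper, since $\partial\ell/\partial p$ is Lipschitz in $p$ and the dependence of $p$ on $s$ through $x$ is linear), Picard--Lindel\"of guarantees a unique solution $s$ on $[0,h]$, and the Euler interpolants $\tilde s_n$ converge uniformly to $s$ with error $O(1/n)$. In particular, $\sigma_n(\lceil nh\rceil)=\tilde s_n(h)\to s(h)$, so the $\lceil nh\rceil$-fold discrete process with learning rate $\eta/n$ produces, in the limit, $w_{t+1}=w_t-s(h)x$ with $s$ satisfying \eqref{eq:diffeq}.

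The main obstacle is justifying that this particular scaling --- $n$ times as many updates at learning rate $\eta/n$ --- is the right formalization of ``letting the learning rate become infinitesimal.'' The motivation is the squared-loss calculation leading to \eqref{eq:deriv1}, where only the product $h\eta$ survives the limit; this is exactly the regime in which Euler's method is consistent with the ODE. Under the mild regularity assumptions on $\partial\ell/\partial p$ noted above (which are satisfied by all loss functions treated in the sequel), the identification is rigorous and the theorem follows.
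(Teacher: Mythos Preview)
Your proposal is correct and follows the same conceptual line as the paper: identify the discrete gradient process with small learning rate as an Euler scheme for the ODE \eqref{eq:diffeq} and pass to the limit. The paper's own argument is considerably more informal---it simply writes the one-step relation $s(h+\epsilon)=s(h)+\epsilon\,\eta\,\partial\ell/\partial p|_{p=(w_t-s(h)x)^\top x}$, forms the difference quotient, and lets $\epsilon\to 0$ to read off the ODE, without invoking Euler convergence or any regularity hypothesis on $\partial\ell/\partial p$. Your version supplies exactly the rigor the paper omits (Lipschitz right-hand side, Picard--Lindel\"of, uniform convergence of the Euler interpolants), so it is a strict improvement on the paper's proof rather than a different route.
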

The proof is in the appendix.
This theorem is our framework for deriving updates for many loss functions.
Plugging a loss function in (\ref{eq:diffeq}) gives an ODE whose solution
is the result of a continuous gradient descent process. The ODE 
can be easily solved by separation of variables. 

As a sanity check for squared loss, $\frac{\partial \ell}{\partial p}=p-y$, (\ref{eq:diffeq}) gives
\[
 s'(h) = \eta ((w_t -s(h)x)^\top x-y), \quad s(0) = 0
\]
a linear ODE, whose solution exactly rederives (\ref{eq:deriv1}).

\subsection{Other Loss Functions} 

\begin{table*}
\caption{\label{tab:losses}Importance Invariant and Imp$^2$ (cf. section~\ref{imp2}) Updates for Various Loss Functions}
\centering{
\begin{tabular}{|c|c|c|c|}
\hline 
Loss & $\ell(p,y)$ & Invariant Update $s(h)$ & Imp$^2$ Update\tabularnewline
\hline
\hline 
Squared & $\frac{1}{2}(y-p)^{2}$ & $\frac{p-y}{x^{\top}x}\left(1-e^{-h\eta x^{\top}x}\right)$ & $\frac{h\eta(p - y)}{1+ h\eta x^{\top}x}$\tabularnewline
\hline 
Logistic & $\log(1+e^{-yp})$ & $\frac{W(e^{h\eta x^{\top}x+yp+e^{yp}})-h\eta x^{\top}x-e^{yp}}{yx^{\top}x}$
for $y\in\{-1,1\}$ & Not Closed \tabularnewline
\hline 
Exponential & $e^{-yp}$ & $\frac{py-\log(h\eta x^{\top}x+e^{py})}{x^{\top}xy}$ for $y\in\{-1,1\}$ & Not Closed \tabularnewline
\hline 
Logarithmic & $y\log\frac{y}{p}+(1-y)\log\frac{1-y}{1-p}$ & $\begin{array}{cc}
\mbox{if } y=0 & \frac{p-1+\sqrt{(p-1)^2+2h\eta x^\top x}}{x^\top x}\\
\mbox{if } y=1 & \frac{p-\sqrt{p^2+2h\eta x^\top x}}{x^\top x} \end{array}$ & Not Closed \tabularnewline
\hline 
Hellinger &
$2(1-\sqrt{py}-\sqrt{(1-p)(1-y)})$
%$(\sqrt{p}-\sqrt{y})^{2}+(\sqrt{1-p}-\sqrt{1-y})^{2}$ 
& $\begin{array}{cc}
\mbox{if } y=0 & \frac{p-1+\frac{1}{4}(12h\eta x^{\top}x+8(1-p)^{3/2})^{2/3}}{x^{\top}x}\\
\mbox{if } y=1 & \frac{p-\frac{1}{4}(12h\eta x^{\top}x+8p{}^{3/2})^{2/3}}{x^{\top}x} \end{array}$ & Not Closed\tabularnewline
\hline 
Hinge & $\max(0,1-yp)$ & $-y\min\left(h\eta,\frac{1-yp}{x^{\top}x}\right)$ for $y\in\{-1,1\}$ & Same\tabularnewline
\hline 
$\tau$-Quantile & $\begin{array}{cc}
\mbox{if } y>p & \tau(y-p)\\
\mbox{if } y\leq p & (1-\tau)(p-y)
\end{array}$  & 
$\begin{array}{cc}
\mbox{if } y>p & -\tau\min(h\eta, \frac{y-p}{\tau x^\top x}) \\
\mbox{if } y\leq p & (1-\tau) \min(h\eta, \frac{p-y}{(1-\tau) x^\top x})
\end{array}$ & Same \tabularnewline
\hline
\end{tabular}
}
\end{table*}

Using (\ref{eq:diffeq}) as our framework, we can derive step 
sizes for many popular loss function as summarized in
table~\ref{tab:losses}. 

For the logistic loss, the solution involves the
Lambert W function: $W(z)e^{W(z)}=z$, and the 
solution can be verified using $W'(z)=\frac{W(z)}{z(1+W(z))}$.
The exponential loss also fits
nicely into our framework.

For the logarithmic loss
the ODE has no explicit form for all $y \in [0,1]$. 
The table presents the common case $y \in \{0,1\}$. In this case 
each value of $y$ gives rise to an ODE whose solution has an explicit form.
Note that here the ODE solutions
satisfy a second degree equation and hence each branch 
has two solutions. We have selected the one 
satisfying $s'(0)=\eta \frac{\partial \ell}{\partial p}$.
To avoid an infinite loss, we can clip the predictions 
away from 0 and 1 (and update using $\min(h,h')$ 
where $h'$ is the importance weight that hits the clipping point) 
or use a link function such as $\tanh$. 
In the appendix we show updates for this latter case.

A similar situation arises for the Hellinger loss.
The solution to (\ref{eq:diffeq})  has no simple form 
for all $y \in [0,1]$ but for $y \in \{0,1\}$ we get
the expressions in table~\ref{tab:losses}.

\subsubsection{Hinge Loss and Quantile Loss}

Two other commonly used loss function are the 
hinge loss and the $\tau$-quantile loss 
where $\tau \in [0,1]$ is a parameter. These are differentiable everywhere except 
at one point where the subdifferential contains zero.
 
Hence, for the hinge loss, a valid expression for (\ref{eq:diffeq}) is
\[
s'(h)=
\begin{cases}
-\eta y & y(w-s(h)x)^\top x<1\\
0  & y(w-s(h)x)^\top x\geq 1
\end{cases}
\] 
The first branch (together with $s(0)=0$) gives $s(h)=-yh \eta$
for $y(w+yh\eta x)^\top x<1$. Otherwise, i.e. when 
$h \geq h_{\textrm{hinge}} = \frac{1-y w^\top x}{\eta x^\top x}$,
$s(h)$ is a constant. Here $h_{\textrm{hinge}}$ is the importance 
weight that would make the updated prediction lie at the hinge. 
To maintain continuity at $h_{\textrm{hinge}}$
we set $s(h)=-yh_{\textrm{hinge}} \eta$. In conclusion
\[
s(h)=-y\min(h,h_{\textrm{hinge}})\eta
\]
This matches the intuition when one thinks
about the limit of infinitely many infinitely 
small updates: For large importance weights,
the process will bring the prediction up to 
$y$ and make no further progress. 

The quantile loss is similar and the update rule 
first computes the importance weight $h'$ that would take the 
updated prediction at the point of nondifferentiability and then 
multiplies the gradient by $\min(h,h')$. 

\subsection{Variable Learning Rate} \label{variable-eta}
To handle a decaying learning rate $\eta_t$, we just need to 
slightly modify (\ref{eq:diffeq}). Let $\eta_t(u)$ be the value of 
the learning rate $u$ timesteps after time $t$. Then (\ref{eq:diffeq}) becomes
\[
s'(h)= \eta_t(h) \left.\frac{\partial \ell}{\partial p}\right|_{p=(w_t -s(h)x_t)^\top x_t}, \quad s(0) = 0
\]
The solutions in this case are not qualitatively any different from 
the solutions of (\ref{eq:diffeq}). We just need to replace 
the occurrences of $h\eta$ with  
$\int_0^h \eta_t(u) du$. Again for popular choices of learning 
rate such as $\eta_t(u)=\frac{1}{(t+u)^p}$ with $p=\frac{1}{2}$ or $p=1$,
this has a closed form.

\subsection{Regularization}

Theorem~\ref{mainthm} can be modified to 
handle losses of the form $\ell(w^\top x,y) + \frac{\lambda}{2} ||w||^2$
where $||w||$ is the Euclidean norm of $w$.
However, the resulting differential equation is considerably harder
and we have only been able to obtain a solution for the case of squared loss.
Therefore we describe 
an alternative way of incorporating regularization based on
a splitting approach
\cite{duchi2009efficient}. First perform $h$ 
unconstrained steps using the closed form solution,
then compute the effect of regularization:
\[
w_{t+1} = \argmin_w \frac{1}{2}||w - (w_t -s(h)x)||^2 + \frac{h\eta\lambda}{2} ||w||^2.
\]
Note that we apply all $h$ regularizers at once.
The solution to the above optimization problem is
\[
w_{t+1} =  \frac{w_{t} - s(h) x_t}{1+h\eta\lambda}
\]
This approach can also handle other regularizers 
such as $\lambda||w||_1$ leading to 
a truncated gradient update \cite{langford2009sparse}.

\section{PROPERTIES OF THE UPDATES} \label{properties}
\subsection{Invariance}
First we show that the invariance property 
we mentioned in the introduction holds.
It is convenient to explicitly state the 
dependence on the prediction $p$, by writing
$s(p,h)$ instead of $s(h)$.
The following theorem states that 
an update with an importance weight $a+b$ is equivalent to
an update with importance $a$ immediately followed by an
update with importance $b$.
\begin{thm} \label{thm:invariance} 
Let $s(p,h)$ be the solution of
\[
\frac{\partial s}{\partial h} = \eta \left.\frac{\partial \ell}{\partial p}\right|_{p=w^\top x -s(p,h) x^\top x}, \quad s(p,0) = 0
\]
where $\ell$ is a continuously differentiable loss. Then
\[
s(p,a+b)=s(p,a)+s(p-s(p,a)x^\top x,b)
\]
\end{thm}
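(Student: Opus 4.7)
The plan is to read the identity as the semigroup property of the ODE flow on the prediction coordinate and establish it by exhibiting two functions of $h$ that satisfy the same initial value problem. Fix the starting prediction $p = w^\top x$ and the first importance weight $a$, and let $\tilde p := p - s(p,a)\, x^\top x$ denote the prediction after the first update. Define
\[
f(h) := s(p,\,a+h), \qquad g(h) := s(p,a) + s(\tilde p,\, h),
\]
for $h \in [0,b]$. Evaluating both at $h=b$ will give the claim, so it suffices to show $f \equiv g$.

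Next I would check initial conditions and derivatives. Clearly $f(0) = s(p,a)$ and, using the base case $s(\cdot,0)=0$, also $g(0) = s(p,a) + s(\tilde p,0) = s(p,a)$. Differentiating and substituting into the ODE defining $s$,
\[
f'(h) = \eta\left.\frac{\partial \ell}{\partial p}\right|_{p = w^\top x - f(h)\,x^\top x},
\qquad
g'(h) = \eta\left.\frac{\partial \ell}{\partial p}\right|_{p = \tilde p - s(\tilde p,h)\,x^\top x}.
\]
Plugging in the definition of $\tilde p$ collapses the point of evaluation in $g'(h)$ to $p - (s(p,a)+s(\tilde p,h))\,x^\top x = p - g(h)\,x^\top x$, which is exactly the point of evaluation in $f'(h)$ with $f$ replaced by $g$. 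Hence $f$ and $g$ are both solutions of the same scalar initial value problem $y'(h) = \eta\,(\partial \ell/\partial p)|_{p - y(h)\,x^\top x}$ with $y(0) = s(p,a)$.

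The final step is uniqueness for this IVP, which forces $f \equiv g$ on $[0,b]$ and yields the identity at $h=b$. This is where the main obstacle lies: continuous differentiability of $\ell$ gives continuity of the right-hand side in $y$, which by itself yields only Peano existence, not uniqueness. To close the argument I would invoke Picard--Lindel\"of, which requires $\partial\ell/\partial p$ to be locally Lipschitz in its argument. This mild additional regularity is satisfied by all the loss functions listed in Table~\ref{tab:losses} on the relevant domains (and holds trivially for the squared, exponential, logistic, logarithmic and Hellinger losses one cares about), so under this assumption the semigroup identity $s(p,a+b) = s(p,a) + s(\tilde p,b)$ follows directly.
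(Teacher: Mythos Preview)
Your argument is essentially identical to the paper's: fix $a$, define the two single-variable functions $h\mapsto s(p,a+h)$ and $h\mapsto s(p,a)+s(p-s(p,a)x^\top x,h)$, verify they share the initial value $s(p,a)$ and satisfy the same scalar ODE $y'=\eta(\partial\ell/\partial p)|_{p-y\,x^\top x}$, and conclude by uniqueness. Your added remark that continuity of $\partial\ell/\partial p$ alone gives only Peano existence and that local Lipschitz regularity is what actually secures uniqueness is a fair point the paper glosses over.
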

The proof is in the appendix and 
uses the Existence and Uniqueness Theorem for ODEs. 

\subsection{Safety} 
For some loss functions such 
as squared loss, hinge loss and quantile loss the residual 
$w_t^\top x_t-y_t$ tells us whether the learner is overestimating
or underestimating the target. We call an update safe if
\[
\frac{w_{t+1}^\top x_t-y_t}{w_t^\top x_t-y_t}\geq 0
\]
whenever $(w_t^\top x_t-y_t)\neq 0$. Since the 
residual does not change sign after a safe update,
it leads to sane results even when the learning rate is 
very aggressive. 

Standard gradient descent is not safe, and importance invariant 
step sizes always are. This should be obvious for the hinge loss and 
the quantile loss because they use the minimum necessary step.
For squared loss:
\begin{eqnarray*}
\frac{w_{t+1}^\top x -y}{w_{t}^\top x -y} & = &
\frac{w_t^\top x + \frac{y-w_t^\top x}{x^\top x}\left(1-e^{-h\eta x^\top x}\right)x^\top x -y}{w_{t}^\top x -y} \\
%& = & \frac{(w_t^\top x -y)(1 - (1-e^{-h\eta x^\top x}))}{w_{t}^\top x -y} \\
& = & e^{-h \eta x^\top x} > 0
\end{eqnarray*}
hence the update is safe.

\subsection{Fallback Regret Analysis}

Here we provide a fallback analysis for the case $h_t=1$.
For simplicity we only show the results for squared loss and 
$||x_t||=1$ for all $t$. However, this can be extended to other losses as
a Taylor expansion of each update around $\eta = 0$ shows that 
to first order, it is equivalent to online 
gradient descent. Hence we expect a regret 
analysis similar to the one achieved by the underlying 
learning rate schedule. A proof of the following theorem
is in the appendix. 
\begin{thm}\label{regretthm}
If $\ell(p,y)=(p-y)^2$ and $||x_t||=1$ for all $t$ then
the importance invariant update attains a regret of
$O(\sqrt{T})$ when 
$\eta_t=\frac{1}{\sqrt{t}}$, and a regret of $O(\log(T))$ when 
$\eta_t=\frac{1}{t}$.
\end{thm}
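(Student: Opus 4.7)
The plan is to recognize that for squared loss with $\|x_t\|=1$, $h_t=1$, and $\ell(p,y)=(p-y)^2$ the importance invariant update reduces to an ordinary online gradient descent step with an effective, slightly shrunken learning rate $\tilde{\eta}_t = 1 - e^{-\eta_t}$. Explicitly, from~(\ref{eq:deriv1}) with $h=1$ and $x_t^\top x_t=1$,
\[
w_{t+1} = w_t - \tfrac{1}{2}\tilde{\eta}_t\, \nabla_w \ell_t(w_t),
\]
where $\tilde{\eta}_t \in (0,\eta_t]$ and $\tilde{\eta}_t = \eta_t - O(\eta_t^2)$. The strategy is then to run the standard regret template with $\tilde{\eta}_t$ in place of $\eta_t$ and show that the logarithmic shrinkage doesn't spoil the asymptotic rates.

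Concretely, I would first expand $\|w_{t+1}-w^*\|^2$ using the update, apply convexity (or strong convexity) of $\ell_t$, and telescope to get the familiar inequality
\[
\sum_{t=1}^T \bigl(\ell_t(w_t)-\ell_t(w^*)\bigr) \;\le\; \sum_{t=1}^T \frac{\|w_t-w^*\|^2 - \|w_{t+1}-w^*\|^2}{2\tilde{\eta}_t} \;+\; \sum_{t=1}^T \tilde{\eta}_t\, G_t^2,
\]
where $G_t = |w_t^\top x_t - y_t|$. Next I would bound $G_t$ uniformly by assuming $\|w^*\|\le D$ and $|y_t|\le Y$ and using the safety property from Section~\ref{properties} to argue that the residuals along $x_t$ never grow under the update, so $G_t \le G$ for some constant depending on the initial conditions and $D,Y$. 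This replaces the usual ``bounded gradient'' assumption.

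For $\eta_t = 1/\sqrt{t}$ I would plug in $\tilde{\eta}_t = \Theta(1/\sqrt{t})$, bound $\sum \tilde{\eta}_t = O(\sqrt{T})$, and rearrange the telescoping diameter term via summation by parts to obtain $\|w_1-w^*\|^2/(2\tilde{\eta}_T) = O(\sqrt{T})$, yielding total regret $O(\sqrt{T})$. For $\eta_t = 1/t$ I would instead invoke the $2$-strong convexity of $\ell_t$ along direction $x_t$, giving the sharper bound
\[
\ell_t(w_t)-\ell_t(w^*) \;\le\; \nabla \ell_t(w_t)^\top (w_t-w^*) - \bigl((w_t-w^*)^\top x_t\bigr)^2,
\]
and combine the extra quadratic term with the telescoping difference using $1/\tilde{\eta}_t = \Theta(t)$, so that the coefficients of $\|w_t-w^*\|^2$ telescope cleanly in the Hazan--Kalai--Kale style to produce an $O(\log T)$ bound.

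The main obstacle is the gradient-boundedness issue: squared loss has gradients that grow linearly in $\|w_t\|$, so the regret template needs the iterates (or at least the residuals) to be controlled. The safety property takes care of this along the update direction but not globally; the cleanest remedy is to restrict the comparator to a bounded Euclidean ball and either project after each step or check directly from the closed form that the iterates themselves stay bounded, which they do because each update is a convex combination of $w_t$ and the interpolant that zeroes out $w_t^\top x_t - y_t$. Once this is handled, everything else is a direct transcription of the standard online gradient descent analysis with $\eta_t$ replaced by $\tilde{\eta}_t$, using $\tilde{\eta}_t/\eta_t \to 1$ to preserve the asymptotic rates.
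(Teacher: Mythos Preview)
Your proposal takes essentially the same route as the paper: both observe that with $\|x_t\|=1$ and $h=1$ the invariant update is ordinary online gradient descent with the effective step $\tilde\eta_t = 1-e^{-\eta_t}$ (the paper's $\eta'_t$), and then plug this into the standard Zinkevich bound for the $O(\sqrt{T})$ case and the Hazan/Shalev-Shwartz strongly-convex bound for the $O(\log T)$ case, checking that the asymptotics are unchanged. The only cosmetic difference is that the paper's sketch evaluates the controlling sums $\tfrac{1}{\eta'_T}+\sum_t\eta'_t$ and $\sum_t(\tfrac{1}{\eta'_t}-\tfrac{1}{\eta'_{t-1}}-1)+\sum_t\eta'_t$ explicitly via Taylor expansion and the incomplete gamma function rather than via your $\tilde\eta_t=\Theta(\eta_t)$ reasoning, and it simply assumes the bounded-gradient and strong-convexity hypotheses of the cited results without the extra safety/boundedness discussion you add.
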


\begin{figure*}[ht]
\includegraphics[width=0.475\textwidth]{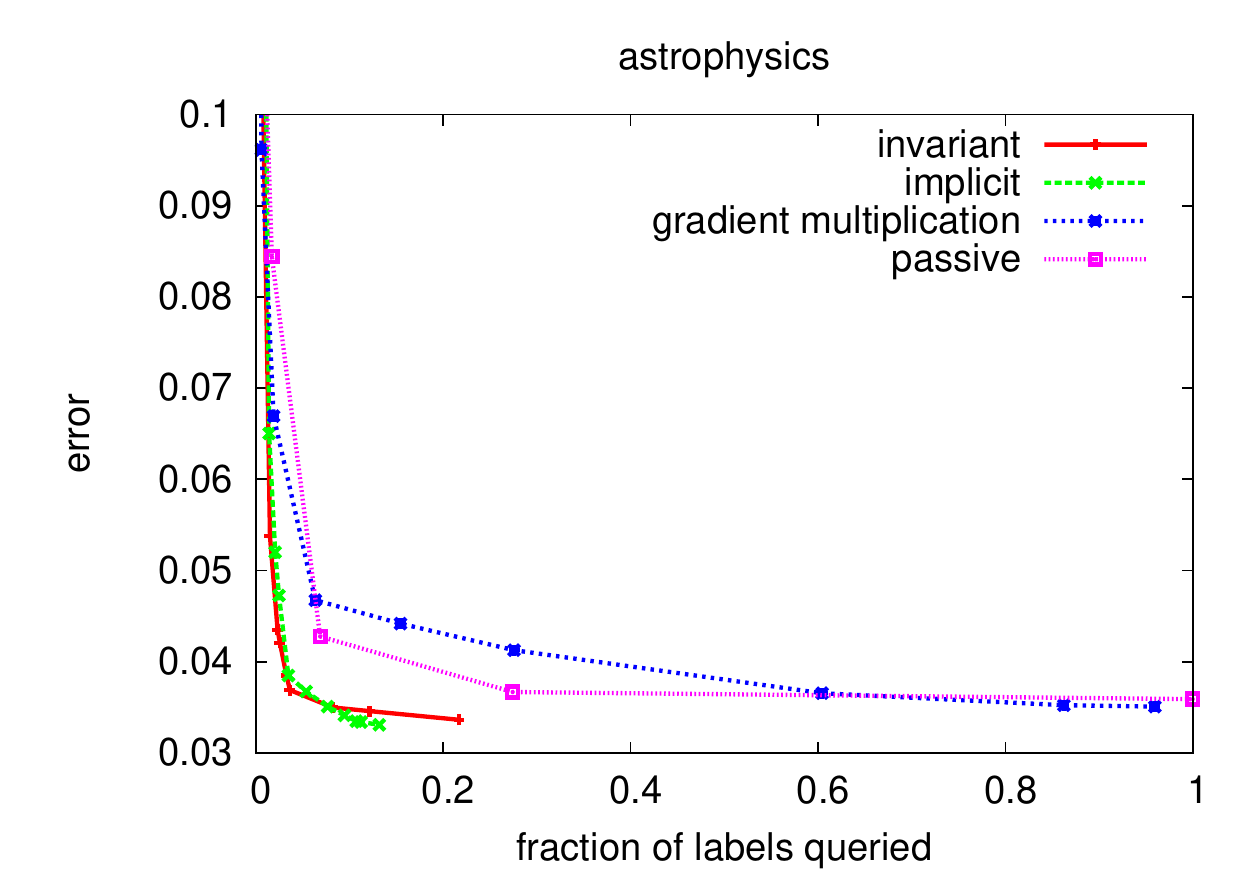}
\includegraphics[width=0.475\textwidth]{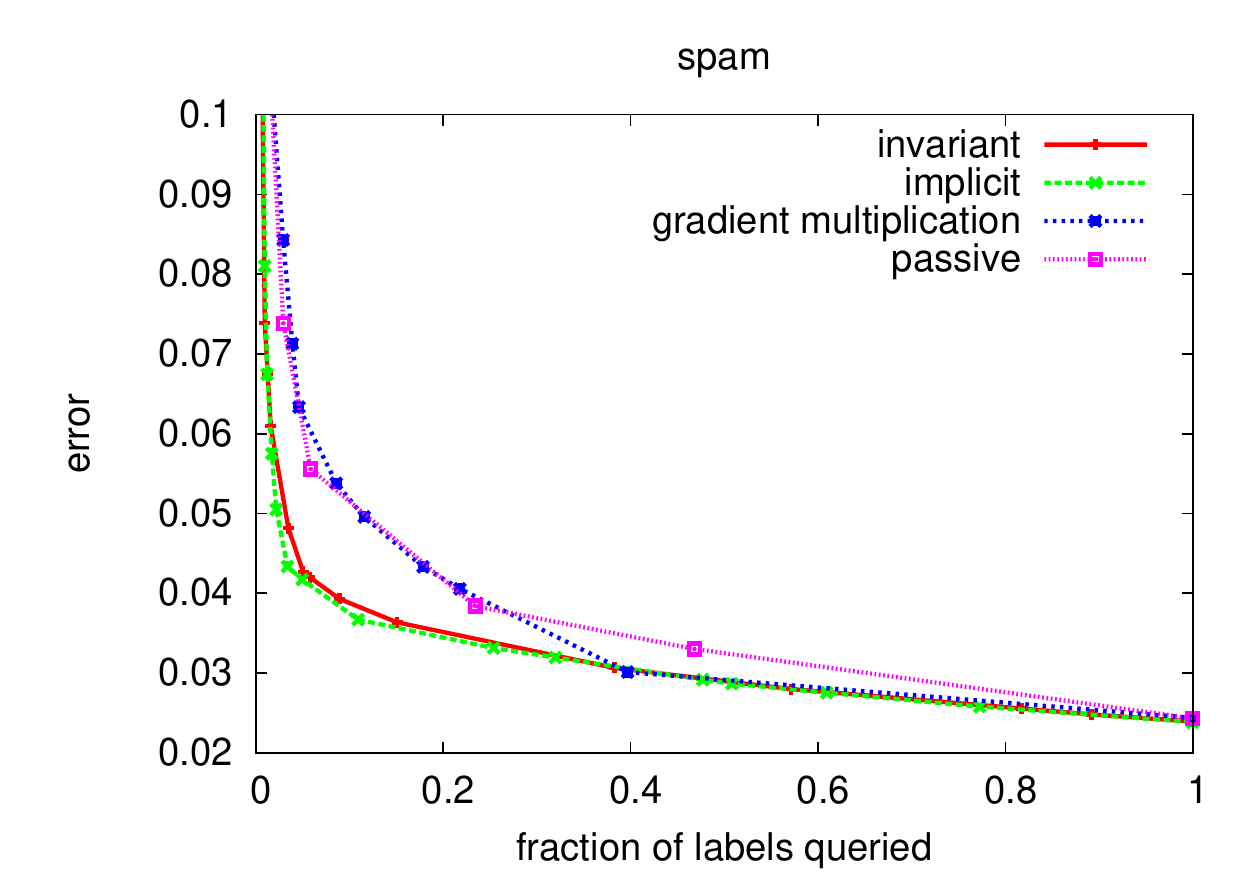}\\
\includegraphics[width=0.475\textwidth]{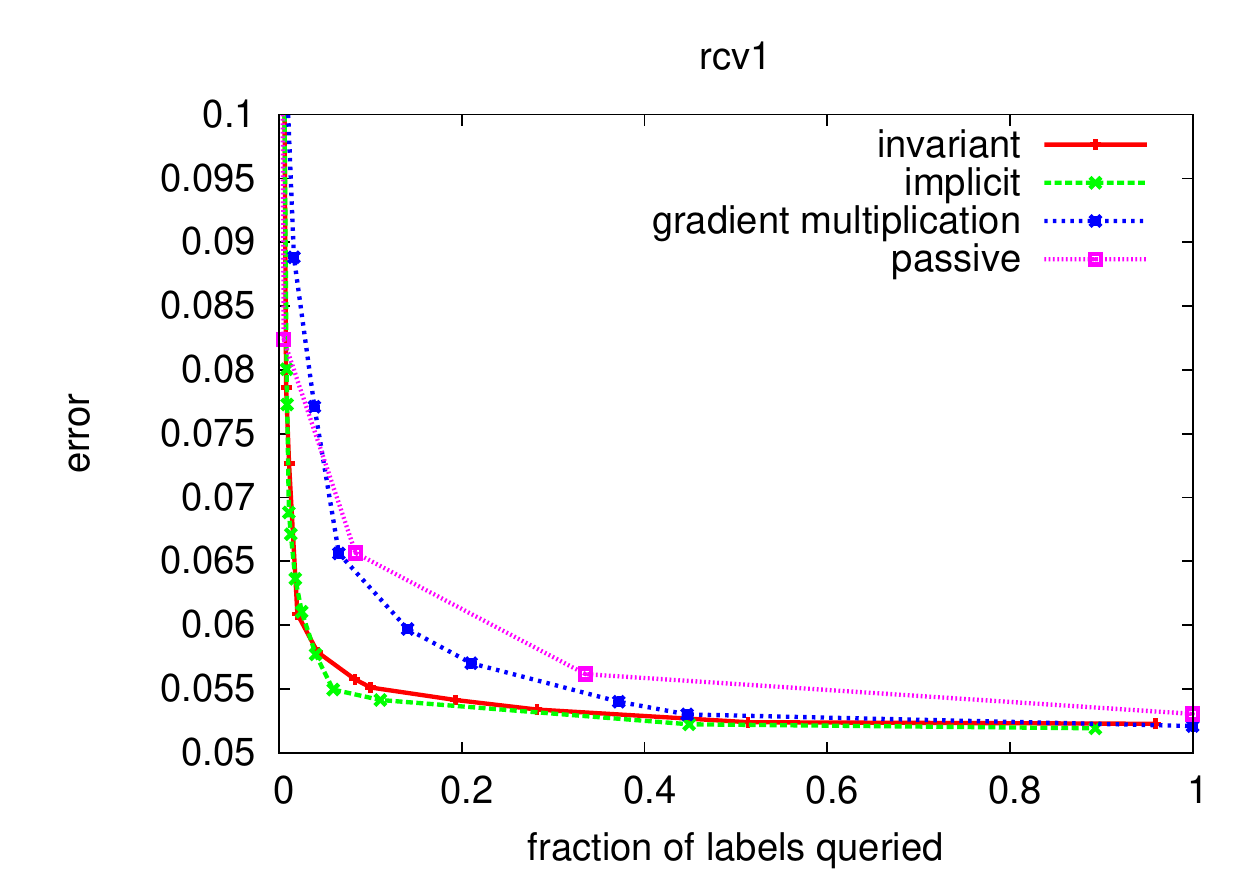}
\includegraphics[width=0.475\textwidth]{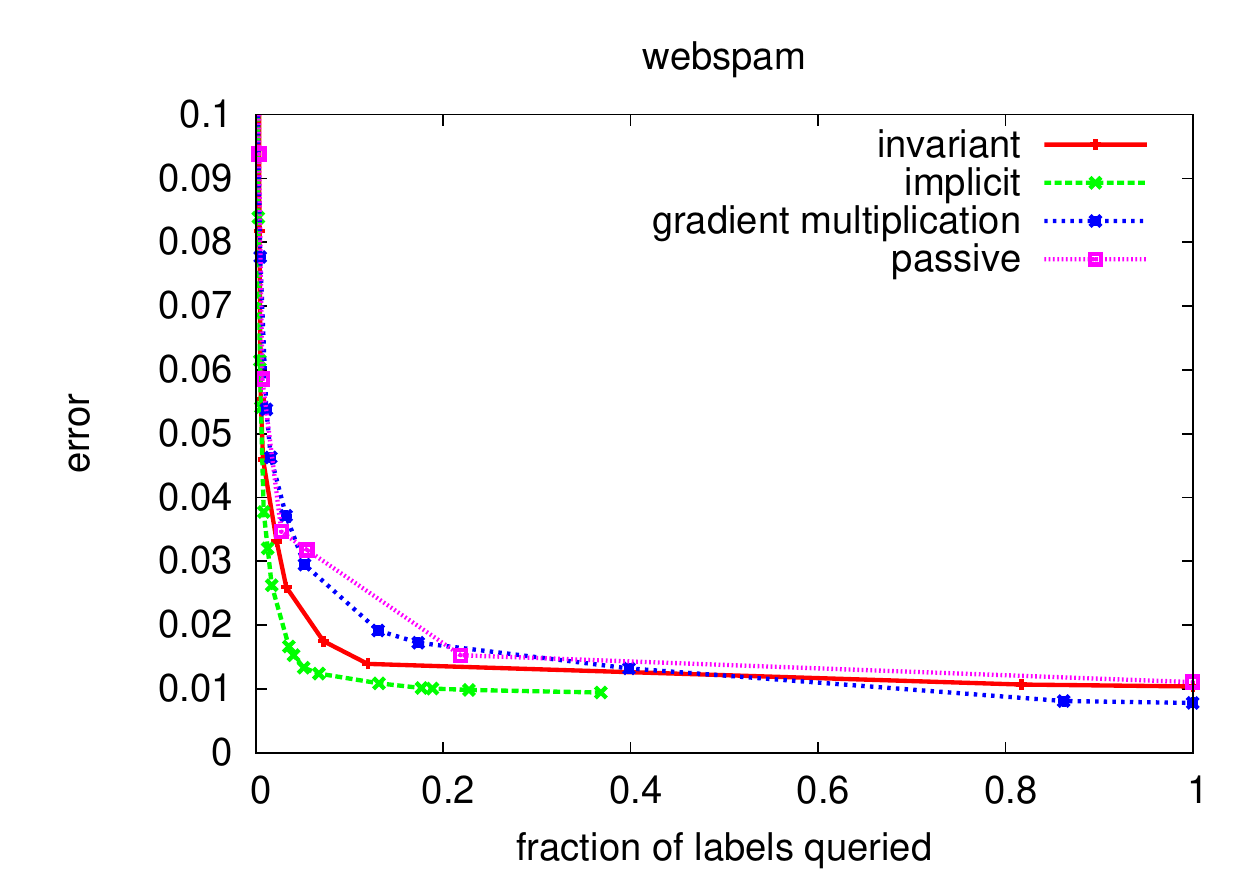}\\
\caption{\label{fig:active} Test error vs. fraction of queried labels for each dataset} 
\end{figure*}

\section{IMPLICIT IMPORTANCE WEIGHT UPDATES}\label{imp2}

Implicit updates, first proposed in~\cite{ig1} and recently 
analyzed in~\cite{ig2} provide an alternative way for handling 
importance weights. An implicit update sets: 
$$w_{t+1}=\argmin\frac{1}{2}||w-w_{t}||^{2}+\lambda\ell(w^\top x,y)$$ where
$\lambda$ is a free parameter similar to the learning rate in
interpretation.  Finding the minimizing $w$ generally requires an
iterative root-finding algorithm which is perhaps an order of
magnitude more expensive than the closed form updates we derive above,
although often easily amortized by the update itself. For squared loss
and hinge loss closed-form solutions have been known. In the appendix 
we show how to derive these as well as a closed form implicit 
update for quantile loss which, to the best of our knowledge, is new.
To adapt implicit updates for importance weights we simply use $\lambda_t =
\eta h_t$ (or $\lambda_t = \int_0^{h_t}\eta_t(u)du$ as in section \ref{variable-eta}) 
yielding an 
algorithm we call Imp$^2$. Imp$^2$ has qualitatively similar properties, 
satisfying Safety and Regret, but not Invariance or a Closed form update.  
In fact, Imp$^2$ for hinge and quantile is precisely equivalent to the 
importance invariant update. 
\vspace{-0.3cm}
\section{EXPERIMENTS} \label{exps}
\vspace{-0.25cm}
We present empirical results on four text 
classification datasets:
`rcv1' is a modified version \cite{LanLiStr07} of RCV1 \cite{lewis2004rcv1},
`astro' is from \cite{joachims2006training},
`spam' was created from the TREC 2005 spam public corpora,
and `webspam' is from the PASCAL large scale learning challenge.
In all experiments we did a single pass through the 
training set and we report the error on the
test set. We try all learning rate schedules
of the form $\eta_t = \frac{\mu}{x_t^\top x_t} \left(\frac{\tau}{t+\tau}\right)^p$
with $(\mu,\tau,p) \in \{2^i\}_{i=0}^{10} \times \{10^i\}_{i=0}^8 \times \{0.5,1\}$.

In the first set of experiments, large importance weights will inevitably appear.
In particular, we treat all four datasets as active learning tasks
and apply the algorithm in \cite{beygelzimer2010agnostic}.
At each timestep $t$ this algorithm computes a probability of querying the label of a 
point based on a quantity $G_t$ that measures the difference 
in error rates between two hypotheses. The empirical risk minimizing (ERM) hypothesis and
an alternative hypothesis that minimizes the empirical risk subject
to predicting a different label than the ERM hypothesis. 
In our case the hypothesis we are learning with online gradient descent 
acts as the ERM hypothesis. 
To get a handle on $G_t$ we estimate $t\cdot G_t$ by the importance weight
that the example would need to have in order for an update with the
alternative hypothesis's prefered label to cause the classification of the
example to become the alternative label. In the appendix
we derive the relevant importance weights for invariant and implicit updates.
Once we have an estimate for $G_t$ we can compute the query probability
$P_t$ and, if we query the label, we add the example to our dataset with 
importance weight $1/P_t$ to keep things unbiased. The query probabilities
turn out to be proportional to the learning rate $\eta_t$ and hence 
the algorithm will generate importance weights 
of order $O(\eta_t^{-1})$ growing at least as fast as $\Omega(\sqrt{t})$.
Notice that importance weights are used for two different tasks: 
estimating $G_t$ and preserving unbiasedness.
Our linear model (with a link function 
$\sigma(p)=\max(0,\min(1,p))$) is optimizing squared loss. 
Details are in the appendix.

\begin{figure}[t]
\centering{
\includegraphics[width=0.475\textwidth]{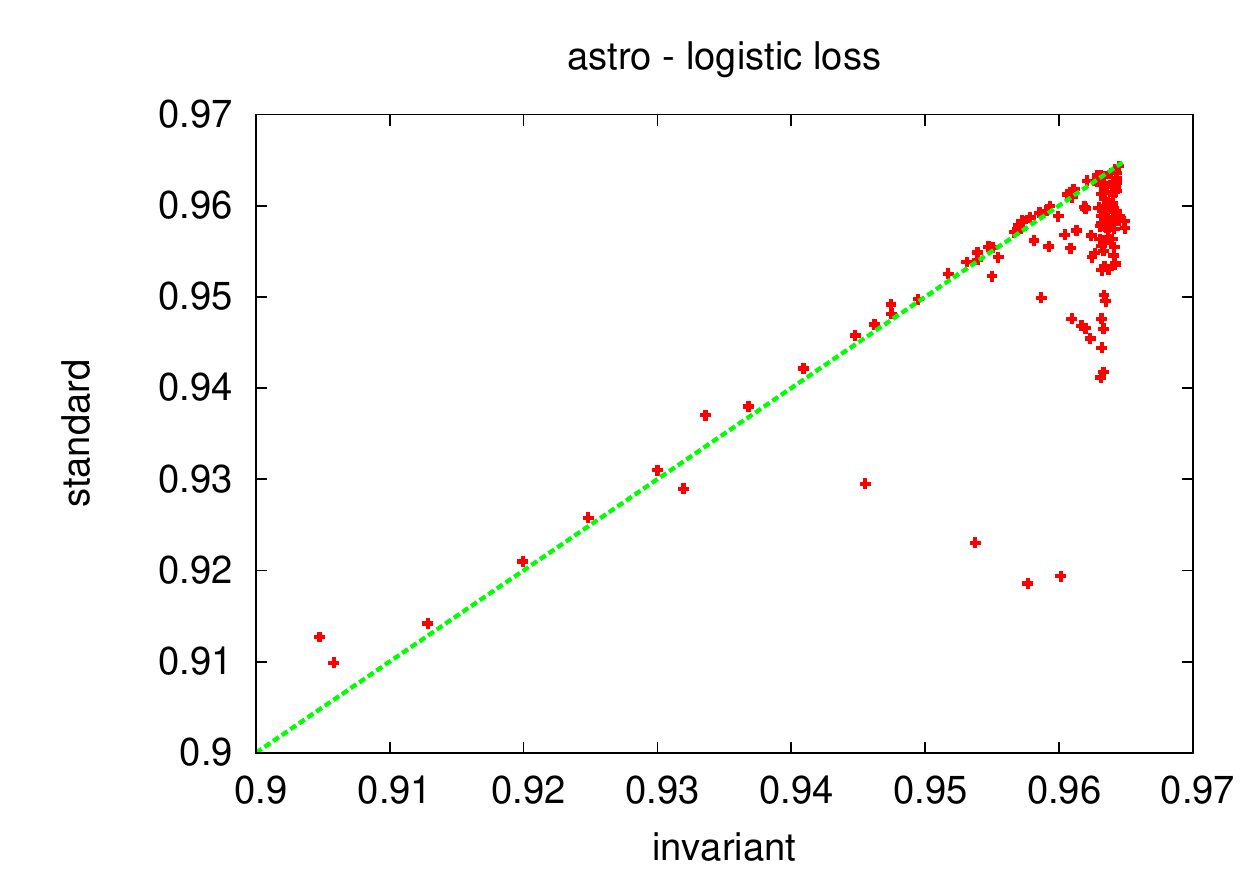}
}
\caption{Scatter plot showing test accuracy with two different updates for various datasets and losses} \label{fig:scatter}
\end{figure}

In Figure~\ref{fig:active} we summarize the results of the active
learning experiments. Each combination of learning rate schedule and
setting of the parameter $C_0$ in the active learning algorithm ($C_0
\in \{10^{-8},10^{-7},\ldots,10^1\}$) is an experiment that can be
represented in the graph by a point whose $x$-coordinate is the
fraction of labels queried by the active learning algorithm and whose
$y$-coordinate is the test error of the learned hypothesis. To
summarize this set of points, the figures plot part of its convex
hull. The points on the convex hull (sometimes called a Pareto
frontier) are experiments which represent optimal tradeoffs between
generalization and label complexity, for some setting of this
tradeoff. When a curve stops sooner than the size of the dataset it
means that there were no experiments in which using more queries gave
better generalization.  We have also included the results from a
typical good run of a passive learner.  The graphs show very
convincingly the value of having an update that handles importance
weights correctly. Doing so yields better generalization and
lower label complexity, than those attainable by multiplying the
gradient with the importance weight. In fact, table~\ref{tab:reduction}
which lists the ratio of labels between passive and active learning 
to achieve a given accuracy, shows that \emph{linearization can 
make active learning need more labels than passive learning}.

In the second set of experiments we used the same datasets but treated 
all examples as having an importance weight of one. We compare 
standard online gradient, vs. invariant and implicit updates
on four loss functions: squared,
logistic, hinge and quantile($\tau=0.5$) loss. The purpose of these
experiments is to highlight the robustness of the invariant updates:
they yield good generalization with little search for a
good learning rate schedule
(also noted in \cite{ig2} for implicit updates). However, we begin with
table~\ref{tab:exhaustive} which shows the test accuracy of the
hypotheses learned by each update after \emph{exhaustively
  searching} over the learning rate schedule. For astro and
rcv1 the differences are very small. The spam dataset is not TF-IDF 
processed, and there we see a substantial improvement with either new update. The
results on the webspam dataset were initially puzzling, but we have
verified that this is not a failure to optimize well; on the contrary
the proposed updates \emph{attain smaller progressive validation
  loss}~\cite{BKL99} than standard online gradient descent on the
training data.  Since progessive validation loss deviates like a test
set, this is evidence that the webspam test set has a different
distribution from the training set.\footnote{The test set consists of
  the last 50000 examples of the original training set. The
  real test labels are not public.}

\begin{table}
\caption{Reduction in label complexity}\label{tab:reduction}
\centering{
\begin{tabular}{ccccc}
\hline
Dataset & astro  & rcv1  & spam  & web \\
Desired Accuracy & 0.963 & 0.943 & 0.967 & 0.986 \\
\hline
Multiplication & 0.45 & 1.59 & 1.28 & 0.54 \\
Implicit & 5.12 & 6.55 & 1.88 & 4.33 \\
Invariant& 7.56 & 6.55 & 2.13 & 1.82 \\
\hline
\end{tabular}
}
\end{table}

\begin{table}
\caption{Test accuracies (grid search over schedules)}\label{tab:exhaustive}
\begin{tabular}{ccccc}
\hline
Dataset & Loss & Invariant & Imp$^2$ & Standard\\
\hline
astro & hinge &  0.96626 & Same & 0.96694\\
 & logistic &  0.96494 & 0.96485 & 0.96432\\
 & quantile &  0.96629 & Same & 0.96703\\
 & squared &  0.96463 & 0.96429 & 0.96469\\
\hline
rcv1 & hinge &  0.94872 & Same & 0.94838\\
 & logistic &  0.94704 & 0.94682 & 0.94743\\
 & quantile &  0.94846 & Same & 0.94859\\
 & squared &  0.94769 & 0.94799 & 0.94790\\
\hline
spam & hinge &  0.97626 & Same & 0.97411\\
 & logistic &  0.96676 & 0.97982 & 0.97603\\
 & quantile &  0.97524 & Same & 0.97484\\
 & squared &  0.97609 & 0.97614 & 0.97563\\
\hline
web & hinge &  0.98936 & Same & 0.99142\\
 & logistic &  0.99094 & 0.99038 & 0.9923\\
 & quantile &  0.98908 & Same & 0.99088\\
 & squared &  0.98960 & 0.98966 & 0.99218\\
\hline
\end{tabular}
\end{table}

To illustrate robustness we present the results in two ways. First, in
Figure~\ref{fig:scatter} we show a scatterplot where each point is
a learning rate schedule and its coordinates are the accuracy of the
learned hypothesis with and without the proposed updates.  Scatter
plots for other loss functions and datasets look very similar and are
included in the appendix.
The plot only shows the cases when both learning rates achieve
accuracy above $0.9$ and there are virtually no schedules for which
one update is superior by more than $0.1$. Among these cases the vast
majority of experiments are clustered under the $y=x$ line and towards
the extreme values of the $x$-axis. Consequently, when using the
importance invariant update many schedules provide excellent
performance.

\begin{table}
\caption{Fraction of schedules with near optimal error}\label{tab:fractions}
\centering{
\begin{tabular}{ccc}
\hline
Loss & Invariant & Standard\\
\hline
hinge &  0.337 & 0.039\\
logistic & 0.109 & 0.050\\
quantile & 0.361 & 0.053\\
squared & 0.306 & 0.031\\
\hline
\end{tabular}
}
\end{table}

To make this more clear we present a second way of
viewing this result. In table~\ref{tab:fractions} we report the 
fraction of learning rate schedules that achieve
generalization accuracy within $0.001$ of the best
learning rate schedule, on average across all 
four datasets. For loss functions for which 
there is a notion of overshooting and can 
benefit from a safe update we observe an order 
of magnitude improvement in the number of 
schedules that converge to near optimal performance. 
\vspace{-0.25cm}
\section{CONCLUSIONS} \label{concl}
\vspace{-0.25cm}
We tuned online gradient descent learning algorithms for various
losses so they efficiently incorporate importance weight information,
as is needed for applications in boosting, active learning, transfer
learning, and learning reductions.  The essential lesson here is that
taking into account the curvature of the loss function can be done cheaply
and provides great benefits in dealing with importance weights.

Motivated by an invariance property we proposed new updates that improve 
the standard update rule even for the
baseline importance weight $1$ case, yielding better prediction
performance while simultaneously reducing the value of learning rate
parameter search. Experiments not reported here show that it even
improves the performance of adaptive gradient descent methods such as
\cite{brendan2010adaptive,duchi2010adaptive}. Since this tuned update
rule is computationally ``free'' we expect wide use.
\vspace{-0.25cm}
\subsubsection*{Acknowledgements}
We thank Robert Kleinberg for help with the proof of Theorem~\ref{thm:invariance} 
and Alex Strehl for insightful discussions.
\vspace{-0.25cm}
%\small
\bibliography{bibfile}
%The following go to the arxiv version
\appendix
\section*{APPENDIX}

\section{PROOF OF MAIN THEOREM}
In this section we prove Theorem \ref{mainthm}:
\begin{unthm}
The limit of the gradient descent process 
as the learning rate becomes infinitesimal
for an example with importance weight $h \in \mathbb{R}_+$
is equal to the update
\[
 w_{t+1} = w_t - s(h) x 
\]
where the scaling factor $s(h)$ satisfies the differential equation:
\begin{equation}\label{eq:diffeq2}
 s'(h) = \eta \left.\frac{\partial \ell}{\partial p}\right|_{p=(w_t -s(h)x)^\top x}, \quad s(0) = 0
\end{equation}
\end{unthm}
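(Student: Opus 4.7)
The plan is to bootstrap the continuous statement from the integer-weight recurrence already proved in Lemma \ref{lem:int}. Fix $x$, $y$, $w_t$ and the loss $\ell$, and define $f(s) := \eta\,\partial\ell/\partial p\bigr|_{p=(w_t - s x)^\top x}$, so the ODE in \eqref{eq:diffeq2} reads $s'(h)=f(s(h))$ with $s(0)=0$. Under mild smoothness of $\ell$, $f$ is Lipschitz in $s$ on bounded intervals, so the Picard--Lindel\"of theorem guarantees a unique local solution $s(h)$.

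Next I would formalize the ``infinitesimal learning rate'' limit that the paper uses informally in the derivation of \eqref{eq:deriv1}. Fix $h\in\mathbb{R}_+$ and an integer $n$, and consider what happens when we present the same example $\lfloor hn\rfloor$ times using learning rate $\eta/n$ instead of $\eta$. Lemma \ref{lem:int}, applied with $\eta$ replaced by $\eta/n$, yields a sequence $s_n(0), s_n(1/n), s_n(2/n),\ldots$ with $s_n(0)=0$ satisfying
\[
s_n\!\left(\tfrac{k+1}{n}\right) \;=\; s_n\!\left(\tfrac{k}{n}\right) \;+\; \tfrac{1}{n}\,f\!\left(s_n\!\left(\tfrac{k}{n}\right)\right).
\]
This is precisely the forward Euler scheme for $s'=f(s)$ with step size $1/n$, evaluated at the mesh point $k/n$.

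The proof then reduces to invoking the classical convergence theorem for Euler's method: if $f$ is Lipschitz on a neighborhood containing the trajectory up to time $h$, then $s_n(\lfloor hn\rfloor/n) \to s(h)$ as $n\to\infty$, uniformly on $[0,h]$, with error $O(1/n)$. Interpreting the left-hand side as the cumulative effect of presenting the example infinitely often with an infinitesimally small learning rate, we recover exactly $s(h)$, which is what the theorem asserts.

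The main obstacle is handling loss functions where $f$ is only piecewise smooth (hinge, quantile, logarithmic with clipping), since standard Euler convergence requires global Lipschitzness. For these, I would either (i) partition $[0,h]$ at the finitely many crossings of non-differentiability, apply the smooth result on each piece, and use continuity of $s$ to paste the solutions together, or (ii) work with the subdifferential form of the ODE as a differential inclusion, as is done implicitly in the hinge-loss derivation in Section \ref{solution}. A secondary issue is the innocuous gap between $hn$ and $\lfloor hn\rfloor$ presentations, which contributes an $O(1/n)$ perturbation absorbed into the Euler error bound.
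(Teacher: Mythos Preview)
Your argument is correct and shares the same core idea as the paper's: the discrete recurrence from Lemma~\ref{lem:int}, run with step $\eta/n$, limits to the ODE as $n\to\infty$. The paper's own proof is a three-line heuristic: it writes $s(h+\epsilon)=s(h)+\epsilon\,\eta\,\partial\ell/\partial p\bigr|_{p=(w_t-s(h)x)^\top x}$, forms the difference quotient, and lets $\epsilon\to 0$. You go further by explicitly identifying the rescaled recurrence as the forward Euler scheme for $s'=f(s)$ and invoking the standard Euler convergence theorem under a Lipschitz hypothesis on $f$; you also appeal to Picard--Lindel\"of for existence and uniqueness of the limiting $s$, and you sketch how to handle the piecewise-smooth losses. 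What the paper's version buys is brevity and intuition; what yours buys is an actual proof---the paper never establishes that the limit exists or that it equals the ODE solution, whereas your Euler-convergence framing does both. Your treatment of the $\lfloor hn\rfloor$ versus $hn$ gap and of the non-smooth cases goes beyond anything the paper addresses.
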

\begin{proof}
In accordance to the proof of 
lemma~\ref{lem:int}, we can compute the effect
of an importance weight of $h+\epsilon$, assuming
we know the effect of an importance weight of $h$,
by performing an additional gradient step with the
learning rate appropriately scaled by $\epsilon$:
\[
 s(h+\epsilon) = s(h)+\epsilon \eta \left.\frac{\partial \ell}{\partial p}\right|_{p=(w_t -s(h)x)^\top x}.
\]
Rearranging we have
\[
 \frac{s(h+\epsilon)- s(h)}{\epsilon} = \eta \left.\frac{\partial \ell}{\partial p}\right|_{p=(w_t -s(h)x)^\top x}.
\]
Taking the limit as $\epsilon \to 0$ gives the result.
The initial condition makes sure that a zero importance weight has no effect in the update.
\end{proof}

\section{PROOF OF INVARIANCE}
In this section we prove Theorem \ref{thm:invariance}. For convenience
let $f(z)=\eta\left.\frac{\partial \ell}{\partial p}\right|_{p=z}$ and $n=x^\top x$.
The theorem says that if $f$ is continuous and $s$ satisfies
\begin{align*}
\frac{\partial s}{\partial h} & = f(p-ns(p,h)) \\
s(p,0) & = 0
\end{align*}
then
\begin{equation} \label{eq:functional-equation}
s(p,a+b) = s(p,a) + s(p-ns(p,a),b).
\end{equation}
\begin{proof}
Fix an arbitrary value of $a$,
and consider the pair of single-variable functions
\begin{align*}
u(b) & = s(p,a+b) \\
v(b) & = s(p,a) + s(p-ns(p,a),b).
\end{align*}
These satisfy 
\[
u(0) = v(0) = s(p,a)
\]
and they satisfy
\begin{align*}
\frac{du}{db} & = 
\left( \frac{\partial s }{\partial h} \right)_{(p,a+b)} =
f(p - ns(p,a+b)) = f(p - nu(b)) \\
\frac{dv}{db} & =
\left( \frac{\partial s}{\partial h} \right)_{(p-ns(p,a),b)} = \\
 & = f(p - ns(p,a) - ns(p-ns(p,a),b)) =
f(p - nv(b)).
\end{align*}
In other words, both $u$ and $v$ are solutions of the 
ordinary differential equation
\[
\frac{dw}{db} = f(p - nw(b))
\]
with initial condition $w(0) = s(p,a)$.  Since 
$f$ satisfies the hypotheses of the existence and
uniqueness theorem for ordinary differential equations,
it is valid to conclude that $u(b) = v(b)$, which
verifies~\eqref{eq:functional-equation}.
\end{proof}

\section{FALLBACK REGRET PROOF}
In this section we prove Theorem \ref{regretthm}:
\begin{unthm}
If $\ell(p,y)=(p-y)^2$ and $||x_t||=1$ for all $t$ then
the importance invariant update attains a regret of
$O(\sqrt{T})$ when 
$\eta_t=\frac{1}{\sqrt{t}}$, and a regret of $O(\log(T))$ when 
$\eta_t=\frac{1}{t}$.
\end{unthm}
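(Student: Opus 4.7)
The plan is to reduce the regret analysis of the importance invariant update to that of standard online gradient descent (OGD) with a slightly modified learning rate. Under the assumption $\|x_t\|=1$, the closed form (\ref{eq:deriv1}) for squared loss specializes to
\[
s(1) = (w_t^\top x_t - y_t)\bigl(1 - e^{-\eta_t}\bigr),
\]
so the invariant update is exactly OGD with effective rate $\tilde\eta_t = 1 - e^{-\eta_t}$ applied to the gradient $(w_t^\top x_t - y_t)x_t$. In particular, for $\eta_t \to 0$ we have $\tilde\eta_t = \eta_t - \tfrac{1}{2}\eta_t^2 + O(\eta_t^3)$, hence $\tilde\eta_t = \Theta(\eta_t)$, and the two rates differ only by an $O(\eta_t^2)$ additive term.

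The first step is to carry out the standard Zinkevich-style potential argument with potential $\Phi_t = \tfrac{1}{2}\|w_t - w^*\|^2$. Expanding $\Phi_{t+1} - \Phi_t$ using the invariant update, substituting $\tilde\eta_t$ for $\eta_t$, and using convexity of the squared loss gives
\[
\ell(w_t^\top x_t,y_t) - \ell((w^*)^\top x_t,y_t)
\le \frac{\Phi_t - \Phi_{t+1}}{\tilde\eta_t} + \frac{\tilde\eta_t}{2}\|g_t\|^2,
\]
where $g_t = 2(w_t^\top x_t - y_t)x_t$ is bounded in norm by a constant $G$ under a standard bounded-domain assumption (which, given safety, can be ensured via projection to a ball containing $w^*$). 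Summing and using Abel summation on the telescoping term with the monotone sequence $1/\tilde\eta_t$ recovers the familiar $\sum_t \tilde\eta_t G^2 + D^2/\tilde\eta_T$ bound. Plugging $\tilde\eta_t = \Theta(1/\sqrt t)$ gives the $O(\sqrt T)$ claim.

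For the $\eta_t = 1/t$ case, I would strengthen the per-step inequality using directional strong convexity: because the per-example squared loss has Hessian $2x_tx_t^\top$ and $\|x_t\|=1$, along the one-dimensional affine line $\{w_t + \alpha x_t : \alpha \in \mathbb{R}\}$ the loss is $2$-strongly convex in $\alpha$, which is precisely the direction in which OGD (and thus the invariant update) moves. This yields the refined inequality
\[
\ell(w_t^\top x_t,y_t) - \ell((w^*)^\top x_t,y_t)
\le \frac{\Phi_t - \Phi_{t+1}}{\tilde\eta_t} - \Phi_t + \frac{\tilde\eta_t}{2}\|g_t\|^2,
\]
and choosing $\tilde\eta_t = \Theta(1/t)$ lets the $-\Phi_t$ term absorb the telescoping remainders to deliver $\sum_t \tilde\eta_t = O(\log T)$ regret in the standard Hazan--Agarwal--Kale style.

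The main obstacle is the discrepancy between $\eta_t$ and $\tilde\eta_t$: every step incurs an additive slack of order $\eta_t - \tilde\eta_t = O(\eta_t^2)$ from replacing one by the other in the potential inequality. Luckily, $\sum_t \eta_t^2$ is $O(\log T)$ when $\eta_t = 1/\sqrt t$ and $O(1)$ when $\eta_t = 1/t$, both dominated by the main bound. A secondary obstacle is verifying that the iterates remain in a bounded set so that $\|g_t\|$ is controlled; this follows from the safety property proved earlier, which forbids overshoot, together with a one-shot projection argument or an inductive bound on $\|w_t\|$ using $|y_t|\le Y$ and $\|x_t\|=1$.
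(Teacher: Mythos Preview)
Your core reduction matches the paper's: with $\|x_t\|=1$ the invariant update is exactly OGD with effective step $\tilde\eta_t = 1-e^{-\eta_t}$, and one then feeds $\tilde\eta_t$ into the standard OGD regret machinery. For the $O(\sqrt T)$ case the paper does precisely what you propose, only more tersely: it plugs $\eta'_t = 1-e^{-1/\sqrt t}$ into the Zinkevich quantity $1/\eta'_T + \sum_t \eta'_t$ and checks via Taylor and integral estimates (an incomplete gamma function appears) that both pieces are $O(\sqrt T)$. Your ``main obstacle'' paragraph is a red herring: since the update \emph{is} OGD with rate $\tilde\eta_t$, the potential inequality holds with $\tilde\eta_t$ exactly and there is no per-step slack from replacing one rate by the other; you only need $\tilde\eta_t=\Theta(\eta_t)$ when evaluating the final sums.

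There is, however, a genuine gap in your $O(\log T)$ argument. Directional strong convexity along $x_t$ does \emph{not} yield the $-\Phi_t=-\tfrac12\|w_t-w^*\|^2$ term you write. From the rank-one Hessian $2x_tx_t^\top$ the second-order lower bound is
\[
\ell_t(w^*)\ \ge\ \ell_t(w_t)+g_t^\top(w^*-w_t)+\bigl(x_t^\top(w^*-w_t)\bigr)^2,
\]
so the negative correction you get is only $-(x_t^\top(w_t-w^*))^2$, not $-\|w_t-w^*\|^2$. The Hazan--Agarwal--Kale telescoping needs the full $-\mu\Phi_t$ to absorb the increment $\bigl(\tfrac{1}{\tilde\eta_t}-\tfrac{1}{\tilde\eta_{t-1}}\bigr)\Phi_t$; the directional version cannot do this unless $w_t-w^*$ happens to align with $x_t$. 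The paper's sketch does not attempt your directional fix: it simply invokes the strongly-convex regret formula of \cite{shalev2007logarithmic} with $\eta'_t=1-e^{-1/t}$, then verifies numerically that $\tfrac{1}{\eta'_t}-\tfrac{1}{\eta'_{t-1}}-1=O(1/t^2)$ (so that sum is $O(1)$) and $\sum_t\eta'_t=O(\log T)$. In other words, the paper treats the strong-convexity premise as given and focuses only on showing the modified step sizes still satisfy the needed estimates; your attempt to justify that premise from the rank-one structure does not go through as written.
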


\begin{proof} (Sketch) First, consider the case when
$\eta_t = \frac{1}{\sqrt{t}}$ and therefore the step sizes
are $\eta'_t=1-\exp(-\frac{1}{\sqrt{t}})$. In general the regret of online 
gradient descent depends on the step sizes via \cite{Zinkevich03}
\[
\frac{1}{\eta'_T}+\sum_{t=1}^T \eta'_t =
\frac{1}{1-\exp(-\frac{1}{\sqrt{T}})} + \sum_{t=1}^T (1-\exp(-\frac{1}{\sqrt{t}}))
\]
The Taylor exansion of $\frac{1}{1-\exp(-\sqrt{u})}$ around zero suggests 
that the first term grows asymptotically as
$\sqrt{T}+\frac{1}{2}+O(\frac{1}{\sqrt{T}})$.
The second term can be bounded by $1+\int_{1}^T (1-\exp(-\frac{1}{\sqrt{t}}))dt$
which evaluates to 
\[
\sqrt{T} \exp(-\frac{1}{\sqrt{T}}) +T(1-\exp(-\frac{1}{\sqrt{T}}))-\Gamma(0,\frac{1}{\sqrt{T}})+\Gamma(0,1)
\] 
where $\Gamma(a,x)=\int_x^\infty u^{a-1}e^{-u}du$ is the incomplete 
gamma function, with the property $\Gamma(0,\frac{1}{u})=O(\log(u))$ for 
large $u$. The first two terms are both $O(\sqrt{T})$, the third 
term is $O(\log(\sqrt{T}))$ and the last term is constant.
Hence the regret is $O(\sqrt{T})$, as is the regret we would attain with
$\eta_t=\frac{1}{\sqrt{t}}$.

Now, we consider the case $\eta_t=O(\frac{1}{t})$. This type of schedule 
is known to give logarithmic regret for strongly convex functions such as 
squared loss. The same holds true for our update rule.  
In general, for a 1-strongly convex loss such as squared
loss, regret depends on \cite{shalev2007logarithmic}
\[
\sum_{t=1}^T\left(\frac{1}{\eta'_t}-\frac{1}{\eta'_{t-1}} - 1\right)
+\sum_{t=1}^T \eta'_t=
\]
\[
\sum_{t=1}^T\left(\frac{1}{1-e^{-\frac{1}{t}}}-\frac{1}{1-e^{-\frac{1}{t-1}}} -1\right)
+\sum_{t=1}^T \left(1-e^{-\frac{1}{t}}\right)
\]
By computing the Taylor series of the first term, we see that 
each individual summand is $O(\frac{1}{t^2})$ and therefore 
the first sum is bounded by a constant. 
As before, the second sum can be bounded by
\[
1+T(1-e^{-\frac{1}{T}})+\Gamma(0,\frac{1}{T}).
\]
The second term is less than $1$ and the third term
is $O(\log(T))$, which is the regret attained by the original $\eta_t=\frac{1}{t}$ update.
\end{proof}

\section{EXPLICIT IMPLICIT UPDATES}

Here we derive closed form updates for Imp$^2$ for some losses. The results
on squared loss and hinge loss are known, but the result quantile loss is new.
\subsection{Squared Loss}
For the squared loss we have
\[
w_{t+1}=\arg\min\frac{1}{2}||w-w_{t}||^{2}+\frac{\lambda}{2}(w^{\top}x-y)^{2}
\]
Taking the derivative of the objective with respect to $w$ and setting
to zero we get
\begin{equation}
w=w_{t}-\lambda(w^{\top}x-y)x\label{eq:sqimp1}
\end{equation}
We can solve for $w^\top x$ by taking the inner product of both sides with $x$:
\[
w^{\top}x=w_{t}^{\top}x-\lambda(w^{\top}x-y)x^{\top}x
\]
 \[
w^{\top}x=\frac{w_{t}^{\top}x+\lambda yx^{\top}x}{1+\lambda x^{\top}x}
\]
Now we substitute back in (\ref{eq:sqimp1}) and we get
\[
w=w_{t}-\frac{\lambda(w_{t}^{\top}x-y)}{1+\lambda x^{\top}x}x
\]

\subsection{Hinge Loss}
The implicit update for the hinge loss comes from the solution of
\begin{eqnarray*}
w_{t+1} & = & \arg\min\frac{1}{2}||w-w_{t}||^{2}+\lambda\xi\\
\mathrm{s.t.} &  & \xi\ge0\\
 &  & \xi\geq1-yw^{\top}x
\end{eqnarray*}
This update has been considered by \cite{crammer2006online} under
the name PA-I. They show that the solution to the above optimization problem is 
\[
w_{t+1}=w_{t}+y\min(\lambda,\frac{1-yw_{t}x}{x^{\top}x})x
\]
which is the same as our importance invariant update. However, 
they treat $\lambda$ as a fixed hyperparameter, while here
$\lambda=h_t\eta_t$ is varying as $t$ varies. 

\subsection{Quantile Loss}
For the quantile loss, we can write the update as:
\begin{eqnarray*}
w_{t+1} & = & \arg\min\frac{1}{2}||w-w_{t}||^{2}+\lambda\xi\\
\mathrm{s.t.} &  & \xi\ge\tau(y-w^{\top}x)\\
 &  & \xi\ge(1-\tau)(w^{\top}x-y)
\end{eqnarray*}
To find $w_{t+1}$ we introduce the Lagrangian:
\begin{eqnarray*}
L(w,\xi,\mu)& = & \frac{1}{2}||w-w_{t}||^{2}+\lambda\xi+\\
& & +\mu_{1}(\tau(y-w^{\top}x)-\xi)+\\
& & +\mu_{2}((1-\tau)(w^{\top}x-y)-\xi)
\end{eqnarray*}
with Lagrange multipliers $\mu_{1}\geq0$, $\mu_{2}\ge0$. Setting
the derivative of $L$ to zero w.r.t. $w$ and $\xi$ we get
\begin{equation}\label{eq:lagrangian_quantile}
w=w_{t}+\mu_{1}\tau x-\mu_{2}(1-\tau)x
\end{equation}
\begin{equation}\label{eq:lagrangian_quantile_2}
\mu_{1}+\mu_{2}=\lambda
\end{equation}
We now distinguish the following cases:
\begin{enumerate}
\item $w^{\top}x>y$. This means that $(1-\tau)(w^{\top}x-y)>0$ and therefore
the second constraint suggests $\xi>0$. This implies that $\tau(y-w^{\top}x)-\xi<0$, in other words
the first constraint is not tight. Using the KKT complementary slackness
condition $\mu_{1}(\tau(y-w^{\top}x)-\xi)=0$ we conclude
that in this case $\mu_{1}=0$ and hence $\mu_{2}=\lambda$. Hence
(\ref{eq:lagrangian_quantile}) becomes $w=w_{t}-\lambda(1-\tau)x$. 
Using this we can write the condition
$w^{\top}x>y$ in terms of the a priori known $w_{t}$ as: 
$\frac{w_{t}^{\top}x-y}{(1-\tau)x^{\top}x}>\lambda$.
\item $w^{\top}x<y.$ In a similar manner to the previous case, we can show
that $\mu_{2}=0$ using the complementary slackness condition $\mu_{2}((1-\tau)(w^{\top}x-y)-\xi)=0$.
Therefore $\mu_{1}=\lambda$ and $w=w_{t}+\lambda\tau x$. Again the
condition ($w^\top x > y$) can be written in terms of $w_{t}$
as: $\frac{y-w_{t}^{\top}x}{\tau x^{\top}x}>\lambda$.
\item $w^{\top}x=y$. Plugging in $w$ from (\ref{eq:lagrangian_quantile}) 
and rearranging the terms we have
\[
w_{t}^{\top}x+(\tau(\mu_{1}+\mu_{2})-\mu_{2})x^{\top}x=y
\]
Now we use (\ref{eq:lagrangian_quantile_2}) to get an equation
involving only $\mu_{2}$ which yields
$
\mu_{2}=\frac{w_{t}^{\top}x-y}{x^{\top}x}+\tau\lambda
$
and hence $\mu_{1}=\lambda-\mu_{2}=\frac{y-w_{t}^{\top}x}{x^{\top}x}+(1-\tau)\lambda$.
Plugging these back to the condition on $w$ from the Lagrangian we
get $w=w_{t}-\frac{w_{t}^{\top}x-y}{x^{\top}x}x.$
\end{enumerate}
To sum up, the implicit update for the quantile loss is
\[
w=\begin{cases}
w_{t}-\lambda(1-\tau)x, & \frac{w_{t}^{\top}x-y}{(1-\tau)x^{\top}x}>\lambda\\
w_{t}+\lambda\tau x, & \frac{y-w_{t}^{\top}x}{\tau x^{\top}x}>\lambda\\
w_{t}-\frac{w_{t}^{\top}x-y}{x^{\top}x}x & \mathrm{otherwise}\end{cases}
\]
which is the same as the importance invariant update.

\section{MORE INVARIANT UPDATES}

In this section we just list the updates for 
prediction under the logarithmic loss $\ell(\sigma,y)=y\log\frac{y}{\sigma}+(1-y)\log\frac{1-y}{1-\sigma}$
using the link function $\sigma=\frac{1+\tanh(p)}{2}$ (as usual $p=w^\top x$). As in the case without the link
function, we get an explicit form for $y \in \{0,1\}$ which is 
\[
s(h)=
\begin{cases}
\frac{p + \tanh^{-1}\left(\frac{\omega_0 - 1}{\omega_0 + 1}\right)}{x^\top x} & \mbox{if } y=0 \\
\frac{p - \tanh^{-1}\left(\frac{\omega_1 - 1}{\omega_1 + 1}\right)}{x^\top x} & \mbox{if } y=1  \end{cases} 
\]
where 
\begin{eqnarray*}
\omega_0 & = & W(\exp(e^{2p} +2p+ 4h\eta x^\top x))\\
\omega_1 & = & W(\exp(e^{-2p}-2p+ 4h\eta x^\top x))
\end{eqnarray*}
Recall that $W(z)$ is the Lambert W function.

\section{ACTIVE LEARNING}
The active learning algorithm of \cite{beygelzimer2010agnostic} proceeds 
in $T$ rounds and in each round $t$ it considers example $x_t$. It decides
to query the label of $x_t$ by first computing
a rejection threshold based on $G_t$ the difference in (importance weighted) errors of the ERM hypothesis and
an alternative hypothesis. The latter minimizes the ermirical risk subject to predicting
a different label $y_a$ than that predicted by the ERM hypothesis on the current example. To
get a handle on $G_t$ we estimate $t \cdot G_t$ by the minimum importance weight that
$x_t$ would need to have if $x_t$'s label were $y_a$ so that the update caused the classification 
of the example to become $y_a$.

More specifically, for binary problems with $y\in \{-1,1\}$ 
assume $\hat{y} = \sign(w^\top x)$ and hence the alternative label 
is $y_a = -\hat{y}$. Then we want an importance 
weight $h$ such that $(w - s(h)x)^\top x =0$ where
$s(h)$ is computed using $y_a$ in place of $y$. Since 
$s$ satisfies \eqref{eq:diffeq} , by separating variables, integrating both
sides and making use of the initial condition we get 
Therefore we 
get that 
\[
h = s^{-1}(\frac{w^\top x}{x^\top x})
\]
Notice that $s$ doesn't even need to be in closed form to compute $h$.
The reason is that 
the ODE (\ref{eq:diffeq}) yields a solution directly 
in terms of $s^{-1}$ since if we let $F(s)=\eta\left.\frac{\partial \ell}{\partial p}\right|_{p=w^\top x-s x^\top x}$
then by separation of variables in (\ref{eq:diffeq}) we get:
\begin{equation}\label{eq:vw_gk_estimate}
h = \frac1{\eta_t} \int_0^{\frac{w_t^\top x_t}{||x_t||^2}} \frac{du}{\left.\frac{\partial \ell(p,y_a)}{\partial p}\right|_{p=w_t^\top x_t -u||x_t||^2}}
.
\end{equation}

For implicit updates we do not have a general methodology 
but for each loss we have always been able to find a closed 
form solution for the importance. Below we show how to do 
this in the case of logistic loss. 

For the logistic loss the optimization problem that defines the implicit update is
\[
w_{t+1}=\argmin\frac{1}{2}||w-w_{t}||^{2}+h\eta_t\log(1+\exp(-y_a w^{\top}x_t))
\]
Setting the derivative of the above to zero leads to
\[
w_{t+1}=w_{t}+\frac{h\eta_t y_a}{1+\exp(y_a w_{t+1}^{\top}x_t)} x_t.
\]
The update is implicit 
because $w_{t+1}$ is given in terms of itself.
In any case, to find the relevant importance weight we want 
$w_{t+1}^{\top}x_t=0$ and by taking the inner product of the
above equation with $x_t$ and plugging 
$w_{t+1}^{\top}x_t=0$ in we get
\[
0 = w_{t}^\top x_t +\frac{h\eta_t y_a}{2} x_t^\top x_t
\]
\[
h=-\frac{2 w_{t}^{\top}x_t}{\eta_t y_a x_t^\top x_t}
\]

\section{SCATTERPLOTS}
In Figure~\ref{fig:scatter2} we show a few more scatterplots.
The trend is the same as in Figure~\ref{fig:scatter}.
\begin{figure}[t]
\includegraphics[width=0.475\textwidth]{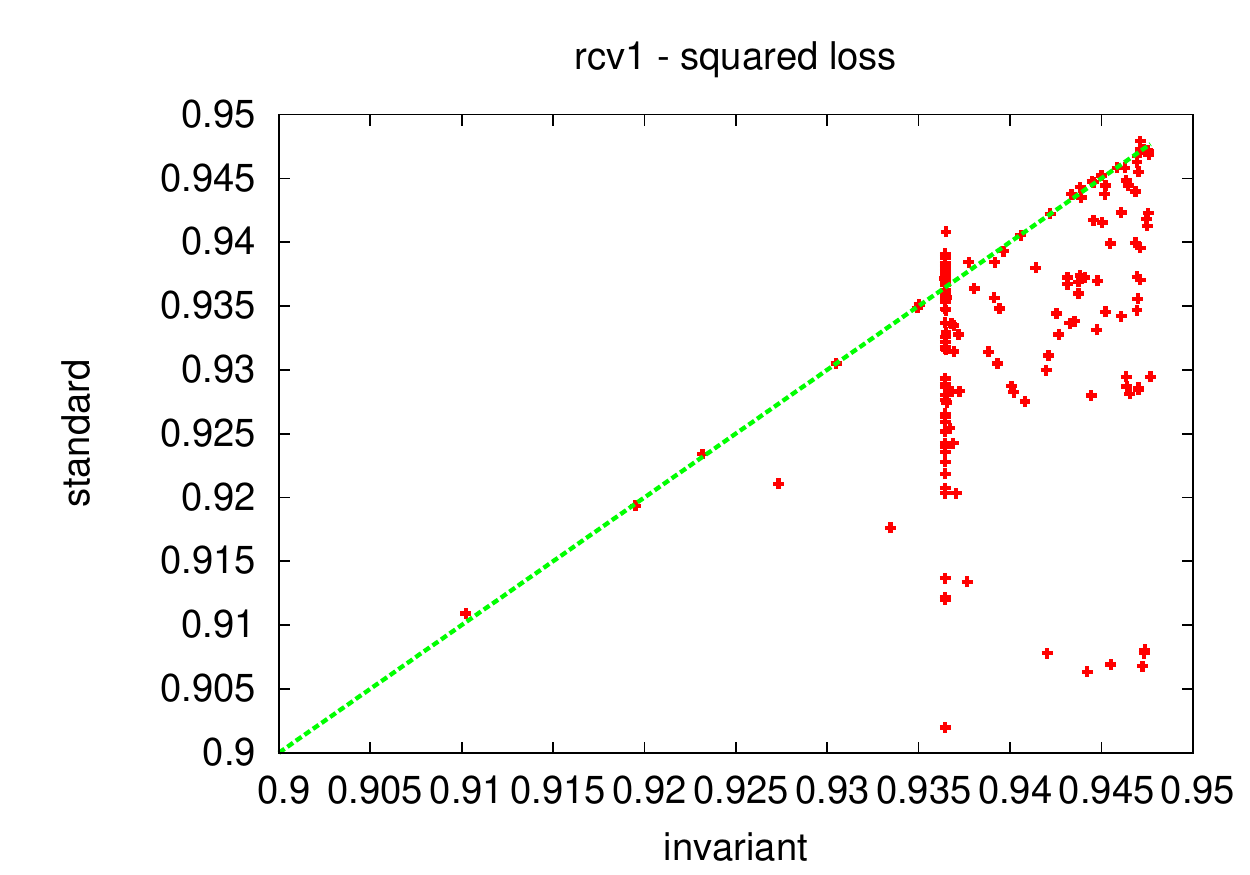}\\
\includegraphics[width=0.475\textwidth]{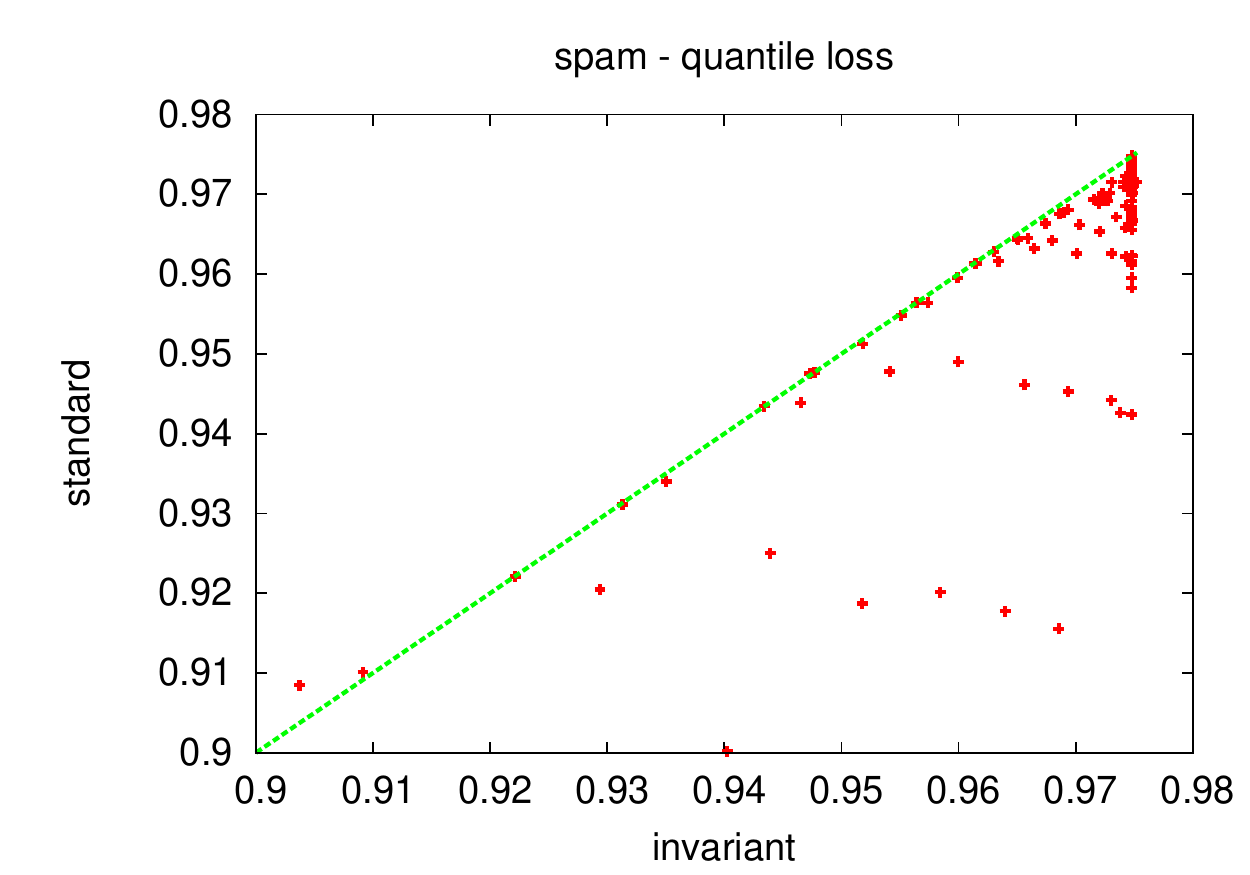}\\
\includegraphics[width=0.475\textwidth]{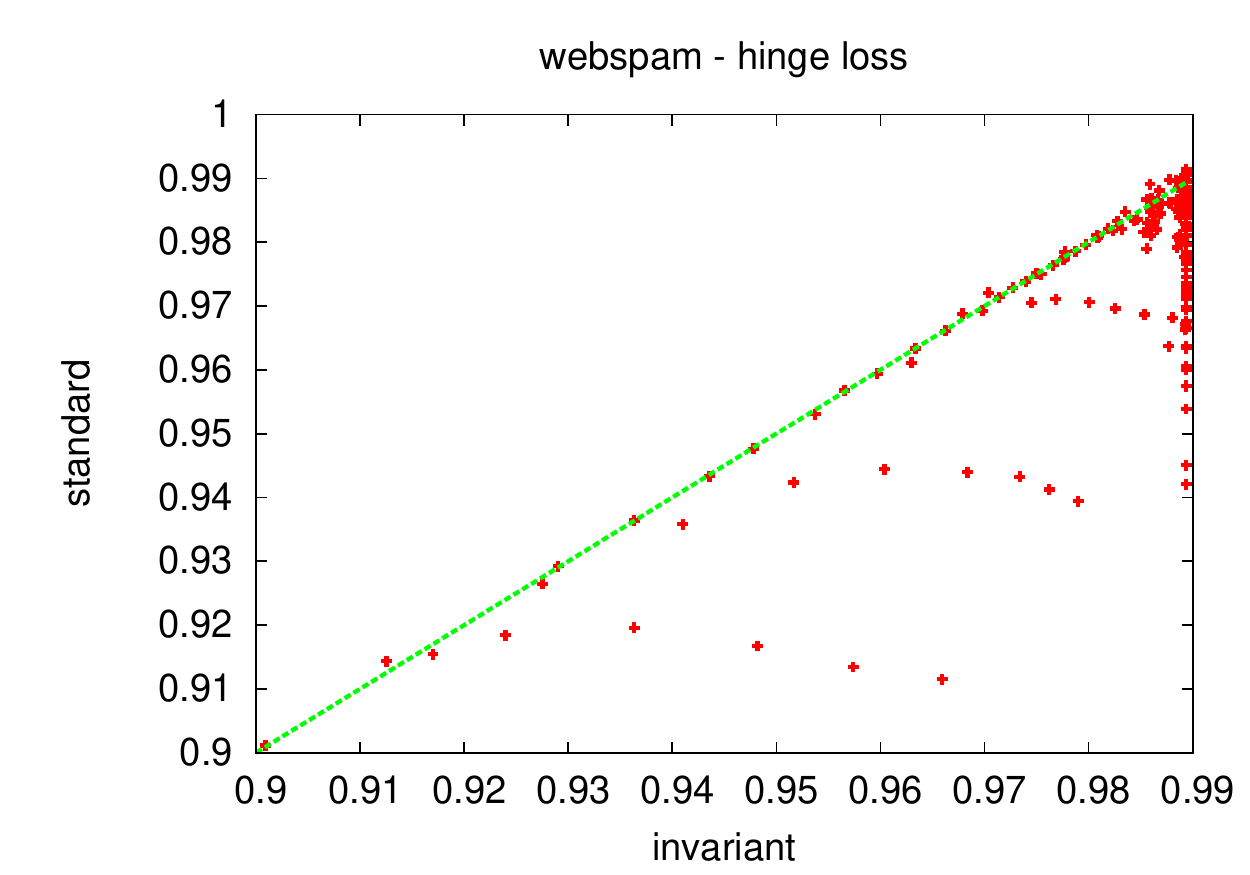}\\
\caption{Scatter plot showing test accuracy with two different updates for various datasets and losses} \label{fig:scatter2}
\end{figure}

\end{document}